\def\eqref#1{equation~\ref{#1}}
\def\1{\bm{1}}
\DeclareMathAlphabet{\mathsfit}{\encodingdefault}{\sfdefault}{m}{sl}
\SetMathAlphabet{\mathsfit}{bold}{\encodingdefault}{\sfdefault}{bx}{n}
\theoremstyle{definition}
\newtheorem{definition}{Definition}[section]
\newtheorem{example}{Example}[section]
\theoremstyle{plain}
\newtheorem{lemma}{Lemma}[section]
\newtheorem{theorem}{Theorem}[section]
\newcommand{\G}{\mathcal{G}}
\newcommand{\Sn}{\mathcal{S}_n}
\newcommand{\F}{\mathcal{F}}
\newcommand{\ttt}{\texttt}
\newcommand{\tbf}{\textbf}
\newcommand{\isoclass}{\tilde}
\newcommand{\canonized}{\overline}
\DeclareMathOperator{\Orb}{Orb}
\DeclareMathOperator{\Stab}{Stab}
\DeclareMathOperator{\Aut}{Aut}
\tikzset{vertexOrdered/.style={outer sep=-1.5}}
\tikzset{vertexUnordered/.style={shape=circle,fill,inner sep=0,outer sep=2}}
\newlength{\graphScale}
\newcommand{\vertexSize}{\footnotesize}
\newcommand{\boxedtikz}[2][]{\vcenter{\hbox{\tikz[#1]{#2}}}}
\newcommand{\graphThree}[1]{
\boxedtikz[x=\graphScale,y=\graphScale]{
  \node (0) at (0.0, 0)              [vertexOrdered] {\vertexSize{0}};
  \node (1) at (1.0, 0)              [vertexOrdered] {\vertexSize{1}};
  \pgfmathparse{-sqrt(3) / 2}
  \node (2) at (0.5, \pgfmathresult) [vertexOrdered] {\vertexSize{2}};
  #1
}}
\newcommand{\graphThreeUnordered}[1]{
\boxedtikz[x=\graphScale,y=\graphScale]{
  \node (0) at (0.0, 0)              [vertexUnordered] {};
  \node (1) at (1.0, 0)              [vertexUnordered] {};
  \pgfmathparse{-sqrt(3) / 2}
  \node (2) at (0.5, \pgfmathresult) [vertexUnordered] {};
  #1
}}
\newcommand{\graphFour}[1]{
\boxedtikz[x=\graphScale,y=\graphScale]{
  \node (0) at (0,  0) [vertexOrdered] {\vertexSize{0}};
  \node (1) at (1,  0) [vertexOrdered] {\vertexSize{1}};
  \node (2) at (1, -1) [vertexOrdered] {\vertexSize{2}};
  \node (3) at (0, -1) [vertexOrdered] {\vertexSize{3}};
  #1
}}
\newcommand{\graphFourUnordered}[1]{
\boxedtikz[x=\graphScale,y=\graphScale]{
  \node (0) at (0,  0) [vertexUnordered] {};
  \node (1) at (1,  0) [vertexUnordered] {};
  \node (2) at (1, -1) [vertexUnordered] {};
  \node (3) at (0, -1) [vertexUnordered] {};
  #1
}}
\newcommand{\atomSize}{\scriptsize}
\newlength{\moleculeScale}
\newcommand{\Hy}{\atomSize{H}}
\newcommand{\Ca}{\atomSize{C}}
\newcommand{\Ox}{\atomSize{O}}
\newcommand{\Ni}{\atomSize{N}}
\newcommand{\Bo}{\atomSize{B}}
\tikzset{atom/.style={outer sep=-2.5}}
\newcommand{\NO}{
\(
\boxedtikz[x=\moleculeScale,y=\moleculeScale]{
  \node (Ni) at (0, 0) [atom] {\Ni};
  \node (Ox) at (.8, 0) [atom] {\Ox};

  \draw (Ni) edge[line width=2.1pt] (Ox);
  \draw (Ni) edge[line width=1.3pt,white,line cap=round] (Ox);
}
\)
}
\newcommand{\HtwoO}{
\(
\boxedtikz[x=0.8\moleculeScale,y=0.8\moleculeScale]{
  \node (Ha) at (0,    0   ) [atom] {\Hy};
  \node (Ox) at (0.79, 0.61) [atom] {\Ox};
  \node (Hb) at (1.58, 0   ) [atom] {\Hy};

  \draw (Ha) -- (Ox);
  \draw (Hb) -- (Ox);
}
\)
}
\newcommand{\HtwoOtwo}{
\(
\boxedtikz[x=0.7\moleculeScale,y=0.7\moleculeScale]{
  \node (Oa) at ( 0.00,  0.00) [atom] {\Ox};
  \node (Ob) at ( 1.00,  0.00) [atom] {\Ox};
  \node (Ha) at (-0.87, -0.50) [atom] {\Hy};
  \node (Hb) at ( 1.87, -0.50) [atom] {\Hy};

  \draw (Ha) -- (Oa);
  \draw (Oa) -- (Ob);
  \draw (Ob) -- (Hb);
}
\)
}
\newcommand{\Ethylene}{
\(
\boxedtikz[x=0.7\moleculeScale,y=0.7\moleculeScale]{
  \node (Cw)  at ( 0.00,  0.00) [atom] {\Ca};
  \node (Ce)  at ( 1.00,  0.00) [atom] {\Ca};
  \node (Hnw) at (-0.87,  0.50) [atom] {\Hy};
  \node (Hne) at ( 1.87,  0.50) [atom] {\Hy};
  \node (Hsw) at (-0.87, -0.50) [atom] {\Hy};
  \node (Hse) at ( 1.87, -0.50) [atom] {\Hy};

  \draw (Hnw) -- (Cw);
  \draw (Hsw) -- (Cw);
  \draw (Hne) -- (Ce);
  \draw (Hse) -- (Ce);

  \draw (Cw) edge[line width=2.1pt] (Ce);
  \draw (Cw) edge[line width=1.3pt,white,line cap=round] (Ce);
}
\)
}
\newcommand{\BoricAcid}{
\(
\boxedtikz[x=0.55\moleculeScale,y=0.55\moleculeScale]{
  \node (B)  at ( 0.00,  0.00) [atom] {\Bo};

  \node (Onw)  at (-0.50,  0.87) [atom] {\Ox};
  \node (Osw)  at (-0.50, -0.87) [atom] {\Ox};
  \node (Oe)   at ( 1.00,  0.00) [atom] {\Ox};

  \node (Hnw)  at (-0.50 - 0.9,  0.87) [atom] {\Hy};
  \node (Hsw)  at (-0.50 + 0.9 * 0.5, -0.87 - 0.9 * 0.87) [atom] {\Hy};
  \node (He)   at ( 1.00 + 0.9 * 0.5,  0.00 + 0.9 * 0.87) [atom] {\Hy};

  \draw (B) -- (Onw);
  \draw (B) -- (Osw);
  \draw (B) -- (Oe);

  \draw (Onw) -- (Hnw);
  \draw (Osw) -- (Hsw);
  \draw (Oe) -- (He);
}
\)
}
\definecolor{lightlightgray}{gray}{0.8}
\title{Entropy Coding of \\Unordered Data Structures}
\author{Julius Kunze\\
  University College London\\
  \ttt{juliuskunze@gmail.com}
  \And
  Daniel Severo\\
  University of Toronto and Vector Institute\\
  \ttt{d.severo@mail.utoronto.ca}
  \And
  Giulio Zani\\
  University of Amsterdam\\
  \ttt{g.zani@uva.nl}
  \And
  Jan-Willem van de Meent\\
  University of Amsterdam\\
  \ttt{j.w.vandemeent@uva.nl}
  \And
  James Townsend\\
  University of Amsterdam\\
  \ttt{j.h.n.townsend@uva.nl}
}
\begin{document}
\maketitle
\begin{abstract}
  We present shuffle coding, a general method for optimal compression of
  sequences of unordered objects using bits-back coding. Data structures that
  can be compressed using shuffle coding include multisets, graphs,
  hypergraphs, and others. We release an implementation that can easily be
  adapted to different data types and statistical models, and demonstrate that
  our implementation achieves state-of-the-art compression rates on a range of
  graph datasets including molecular data.
\end{abstract}

\section{Introduction}\label{sec:introduction}

The information stored and communicated by computer hardware, in the form of
strings of bits and bytes, is inherently ordered. A string has a first and last
element, and may be indexed by numbers in \(\mathbb{N}\), a totally ordered
set. For data like text, audio, or video, this ordering carries meaning.
However, there are also numerous data structures in which the `elements' have no
meaningful order. Common examples include graphs, sets and multisets, and
`map-like' datatypes such as JSON. Recent applications of machine learning to
molecular data benefit from large datasets of molecules, which are graphs with
vertex and edge labels representing atom and bond types (some examples are
shown in \Cref{tab:molecule-examples} below).  All of these data are
necessarily stored in an ordered manner on a computer, but the order then
represents \emph{redundant information}. This work concerns optimal lossless
compression of unordered data, and we seek to eliminate this redundancy.

\begin{table}[ht]
\centering
\caption{
  Examples of molecules and their order information. The `discount' column
  shows the saving achieved by shuffle coding by removing order information
  (see eq.\ \ref{eq:rate}). For each molecule \(\mathbf{m}\), \(n\) is the
  number of atoms and \(\abs{\Aut(\mathbf{m})}\) is the size of the
  automorphism group.  All values are in bits, and $\log$ denotes the binary
  logarithm.
}\label{tab:molecule-examples}
\begin{tabular}[t]{@{}r@{\;\,}lccc@{}}
  \toprule
  \multicolumn{2}{c}{\multirow{2}{*}{\textbf{Molecular
  structure}}}&\textbf{Permutation}&\textbf{Symmetry}&\textbf{Discount}\\
              &            &\(\log n!\)&\(\log\abs{\Aut(\mathbf{m})}\) & \(\log n! -
              \log\abs{\Aut(\mathbf{m})}\) \\
  \midrule
  Nitric oxide     &\NO        &\(1.00\)  &\(0.00\) &\(1.00\) \\\addlinespace
  Water            &\HtwoO     &\(2.58\)  &\(1.00\) &\(1.58\) \\\addlinespace
  Hydrogen peroxide&\HtwoOtwo  &\(4.58\)  &\(1.00\) &\(3.58\) \\\addlinespace
  Ethylene         &\Ethylene  &\(9.49\)  &\(3.00\) &\(6.49\) \\\addlinespace
  Boric acid       &\BoricAcid &\(12.30\) &\(2.58\) &\(9.71\) \\
  \bottomrule
\end{tabular}
\end{table}

Recent work by \citet{severo2023} showed how to construct an optimal lossless
codec for (unordered) multisets from a codec for (ordered) vectors, by storing
information in an ordering. Their method depends on the simple structure of
multisets' automorphism groups, and does not extend to other unordered objects
such as unlabeled graphs. In this paper we overcome this issue and develop
\emph{shuffle coding}, a method for constructing codecs for general `unordered
objects' from codecs for `ordered objects'.
Our definitions of ordered and unordered objects are based on the concept of
`combinatorial species' \citep{joyal1981, bergeron1997}, originally developed to assist
with the enumeration of combinatorial structures.
They include multisets,
as well as all of the other unordered data structures mentioned above,
and many more.

Although the method is applicable to any unordered object, we focus
our experiments on unordered (usually referred to as `unlabeled')
graphs, as these are a widely used data type, and the improvements
in compression rate from removing order information are large (as
summarized in \Cref{tab:results-summary}).
We show that shuffle coding can achieve significant improvements relative
to existing methods, when compressing
unordered graphs under the Erdős-Rényi \(G(n, p)\) model of \citet{erdHos1960evolution}
as well as the recently proposed Pólya's urn-based model
of \citet{severo2023rec}.
Shuffle coding extends to graphs with vertex and edge attributes, such as
the molecular and social network datasets of TUDatasets \citep{Morris+2020}, which
are compressed in \Cref{sec:experiments}.
We release source code\footnote{Source code, data and results are available at \url{https://github.com/juliuskunze/shuffle-coding}.} with
straightforward interfaces to enable future applications of shuffle coding with
more sophisticated models and to classes of unordered objects other than graphs.

\section{Background}
The definitions for ordered and unordered objects are given in \Cref{sec:comb-classes}.
Entropy coding is reviewed in \Cref{sec:codecs}.
Examples are given throughout the section for clarification.

\subsection{Permutable classes}\label{sec:comb-classes}
For \(n\in \mathbb{N}\), we let \([n]\coloneqq \{0, 1, \ldots, n-1\}\), with
\([0] = \emptyset\). The symmetric group of permutations on \([n]\), i.e.\
bijections from \([n]\) to \([n]\), will be denoted by \(\Sn\).  Permutations
compose on the left, like functions, i.e., for \(s, t\in \Sn\), the product
\(st\) denotes the permutation formed by performing \(t\) then \(s\).

Permutations are represented as follows
\begin{equation}
  \graphThree{
    \draw [->] (0) -- (2);
    \draw [->] (1) -- (0);
    \draw [->] (2) -- (1);
  }
  = (2, 0, 1)\in \mathcal{S}_3.
\end{equation}
The glyph on the left-hand side represents the permutation that maps \(0\) to
\(2\), \(1\) to \(0\) and \(2\) to \(1\). This permutation can also be
represented concretely by the vector \((2, 0, 1)\).

Concepts from group theory, including subgroups, cosets, actions, orbits, and stabilizers are
used throughout. We provide a brief introduction in \Cref{app:OS-theorem}.

We will be compressing objects which can be `re-ordered' by applying
permutations. This is formalized in the following definition:

\begin{definition}[Permutable class\footnote{
    This definition is very close to that of a `combinatorial species', the
    main difference being that we fix a specific \(n\). See discussion in
    \citet[66--67]{yorgey2014}.
  }]\label{def:perm-class}
  For \(n\in \mathbb{N}\), a \emph{permutable class} of order \(n\) is a set
  \(\F\), equipped with a left group action of the permutation group \(\Sn\) on \(\F\), which we
  denote with the \(\cdot\) binary operator.  We refer to elements of \(\F\) as
  \emph{ordered objects}.
\end{definition}


\begin{example}[Length \(n\) strings]\label{example:strings}
  For a fixed set \(X\), let \(\F_n = X^n\), that is, length \(n\) strings of
  elements of \(X\), and let \(\Sn\) act on a string in \(\F_n\) by rearranging
  its elements.

  Taking \(X\) to be the set of ASCII characters, we can define a permutable
  class of ASCII strings, with action by rearrangement, for example
  \begin{equation}
  \graphFour{
      \draw [->] (0) -- (2);
      \draw [->] (1) -- (0);
      \draw [->] (2) -- (1);
      \draw [->] (3) edge[loop left] (3);
    }
  \;
  \cdot\,\ttt{"Team"} = \ttt{"eaTm"}.
  \end{equation}
\end{example}

\newcommand{\simpleGraph}{\graphFour{
    \draw (0) -- (1); \draw (0) -- (2);
    \draw (1) -- (2); \draw (2) -- (3);
}}
\begin{example}[Simple graphs \(\G_n\)]\label{example:graphs}
  Let \(\G_n\) be the set of simple graphs with vertex set \([n]\).
  Specifically, an element \(g\in\G_n\) is a set of `edges', which are
  unordered pairs of elements of \([n]\). We define the action of \(\Sn\) on a
  graph by moving the endpoints of each edge in the direction of the arrows,
  for example
  \begin{equation}
\graphFour{
    \draw [->] (0) -- (2);
    \draw [->] (1) -- (0);
    \draw [->] (2) -- (1);
    \draw [->] (3) edge[loop left] (3);
  }\;\cdot
  \simpleGraph
  =\graphFour{
    \draw (0) -- (1);
    \draw (0) -- (2);
    \draw (1) -- (2);
    \draw [ultra thick,white] (1) -- (3);
    \draw (1) -- (3);
  }.
  \end{equation}
\end{example}

Our main contribution in this paper is a general method for compressing
\emph{unordered} objects. These may be defined formally in terms of the
equivalence classes, known as orbits, which comprise objects that are identical
up to re-ordering (see \Cref{app:OS-theorem} for background):


\begin{definition}[Isomorphism, unordered objects]\label{def:isomorphism}
  For two objects \(f\) and \(g\) in a permutable class \(\F\), we say that
  \(f\) is \emph{isomorphic} to \(g\), and write \(f\simeq g\), if there
  exists \(s\in \Sn\) such that \(g = s\cdot f\) (i.e.\ if \(f\) and \(g\) are
  in the same orbit under the action of \(\Sn\)). Note that the relation
  \(\simeq\) is an equivalence relation.  For \(f\in\F\) we use
  \(\isoclass{f}\) to denote the equivalence class containing \(f\), and
  \(\widetilde{\F}\) to denote the quotient set of equivalence classes. We
  refer to elements \(\isoclass{f}\in\widetilde{\F}\) as \emph{unordered
  objects}.
\end{definition}

For the case of strings in \cref{example:strings}, an object's isomorphism
class is characterized by the multiset of elements contained in the string.
For the simple graphs in \cref{example:graphs}, the generalized isomorphism in
\cref{def:isomorphism} reduces to the usual notion of graph isomorphism.
We can define a shorthand notation for unordered graphs, with points at the
nodes instead of numbers:
\begin{equation}
\graphFourUnordered{
    \draw (0) -- (1);
    \draw (0) -- (2);
    \draw (1) -- (2);
    \draw (2) -- (3);
  }\;
  \coloneqq
  \;\widetilde{\simpleGraph}
  \;.
\end{equation}

Using this notation, the unordered simple graphs on three vertices, for
example, can be written:
\begin{equation}
  \widetilde{\G_3} = \left\{\;
    \graphThreeUnordered{
    }\;,\quad
    \graphThreeUnordered{
      \draw (0) -- (1);
    }\;,\quad
    \graphThreeUnordered{
      \draw (0) -- (1);
      \draw (0) -- (2);
    }\;,\quad
    \graphThreeUnordered{
      \draw (0) -- (1);
      \draw (0) -- (2);
      \draw (1) -- (2);
    }
  \;\right\}.
\end{equation}

Finally, we define the subgroup of \(\Sn\) which contains the symmetries of a
given object \(f\):

\begin{definition}[Automorphism group]
  For an element \(f\) of a permutable class \(\F\), we let \(\Aut(f)\)
  denote the \emph{automorphism group} of \(f\), defined by
  \begin{equation}
    \Aut(f)\coloneqq \{s\in\Sn \mid s\cdot f = f\}.
  \end{equation}
  This is the stabilizer subgroup of \(f\) under the action of
  \(\Sn\).
\end{definition}

The elements of the automorphism group of the
simple graph from \cref{example:graphs} are:
\begin{equation}
  \Aut\left(\simpleGraph\right) = \left\{
    \graphFour{
      \draw [->] (0) edge[loop left] (0);
      \draw [->] (1) edge[loop right] (1);
      \draw [->] (2) edge[loop right] (2);
      \draw [->] (3) edge[loop left] (3);
    }\;,\quad
    \graphFour{
      \draw [->] (0) edge[bend left] (1);
      \draw [->] (1) edge[bend left] (0);
      \draw [->] (2) edge[loop right] (2);
      \draw [->] (3) edge[loop left] (3);
    }
  \right\}.
\end{equation}

\subsubsection{Canonical orderings}\label{sec:canonical-orderings}
To define a codec for unordered objects, we will introduce the notion of a
`canonical' representative of each equivalence
class in \(\widetilde{\F}\). This allows us, for example, to check whether two
ordered objects are isomorphic, by mapping both to the canonical representative
and comparing.

\begin{definition}[Canonical ordering]\label{def:canon-order}
  A \emph{canonical ordering} is an operator
  \(\canonized{\cdot}:\F\rightarrow\F\), such that
  \begin{enumerate}
    \item For \(f\in \F\), we have \(\canonized{f}\simeq f\).
    \item For \(f,g\in \F\), \(\canonized{f}=\canonized{g}\) if and only if
      \(f\simeq g\).
  \end{enumerate}
\end{definition}


For strings, any sorting function satisfies properties 1 and 2 and is therefore
a valid canonical ordering. For graphs, the canonical orderings we use are
computed using the \ttt{nauty} and \ttt{Traces} libraries \citep{mckay2014}.
The libraries provide a function, which we call \ttt{canon\_perm}, which, given
a graph \(g\), returns a permutation \(s\) such that \(s\cdot g =
\canonized{g}\). In addition to \ttt{canon\_perm}, \ttt{nauty} and \ttt{Traces} can compute the automorphism group of a given
graph, via a function which we refer to as \ttt{aut}.\footnote{
In fact, a list of generators for the group is computed, rather than the entire
group, which may be very large.
} 

Our method critically depends on the availability of such a function for a given permutation class. While permutable objects other than graphs cannot be directly canonized by
\ttt{nauty} and \ttt{Traces}, it is often possible to embed objects into
graphs in such a way that the structure is preserved and the canonization
remains valid (see \citet{anders2021}). We use an embedding of edge-colored graphs into vertex-colored graphs in order to canonize and compress graphs with edge attributes (which are not directly supported by nauty/traces). We leave more systematic approaches
to canonizing objects from permutable classes as an interesting direction
for future work\footnote{
  \citet{schweitzer2019} describe generic methods for canonization starting
  from a constructive definition of permutable objects using `hereditarily
  finite sets' (i.e.\ not using the species definition).
}.

\subsection{Codecs}\label{sec:codecs}

We fix a set \(M\) of prefix-free binary messages, and a length function
\(l\colon M\rightarrow [0, \infty)\), which measures the number of physical
bits required to represent values in \(M\). Our method requires stack-like (LIFO) codecs, such as those based on the range
variant of asymmetric numeral systems (rANS), to save bits corresponding to the redundant order using bits-back \citep{townsend2019}.

\begin{definition}[(Stack-like) codec]\label{def:codec}
  A \emph{stack-like codec} (or simply \emph{codec}) for a set \(X\) is an invertible function
  \begin{equation}
    \ttt{encode} : M\times X \rightarrow M.
  \end{equation}
  We call a codec \emph{optimal} for a probability distribution over \(X\) with mass function \(P\) if for any \(m\in M\) and \(x\in X\),
  \begin{equation}\label{eq:codec-rate}
    l(\ttt{encode}(m, x)) \approx l(m) + \log\frac{1}{P(x)}.\footnote{
This condition, with a suitable definition of \(\approx\), is equivalent to
rate-optimality in the usual Shannon sense, see \citet{townsend2020a}.}
  \end{equation}
  We refer to \(\log\frac{1}{P(x)}\) as the \emph{optimal rate} and to the inverse of $\ttt{encode}$ as $\ttt{decode}$. Since $\ttt{decode}$ has to be implemented in practice, we treat it as an explicit part of a codec below.
\end{definition}

The \ttt{encode}
function requires a pre-existing message as its first input. Therefore, at the
beginning of encoding we set \(m\) equal to some fixed, short initial message
\(m_0\), with length less than 64 bits. As in other entropy coding methods, which invariably have some small constant overhead, this 'initial bit cost' is amortized as we compress more data. 

We will assume access to three primitive codecs provided by rANS. These are
\begin{itemize}
  \item \ttt{Uniform(n)}, optimal for a uniform distribution on \(\{\ttt{0},
    \ttt{1}, \ldots, \ttt{n-1}\}\).
  \item \ttt{Bernoulli(p)}, optimal for a Bernoulli distribution with
    probability \ttt{p}.
  \item \ttt{Categorical(ps)}, optimal for a categorical distribution with
    probability vector \ttt{ps}. 
\end{itemize}

These primitive codecs can be composed to implement codecs for strings and simple graphs. In \cref{app:ordered-codecs}, we show such a string codec optimal for a
distribution where each character is drawn i.i.d.\ from a categorical with
known probabilities, and a codec for simple graphs optimal for the Erdős-Rényi
\(G(n, p)\) model, where each edge's existence is decided by an independent
draw from a Bernoulli with known probability parameter. We will use these
codecs for ordered objects as a component of shuffle coding.

There is an implementation-dependent limit on the parameter \ttt{n} of
\ttt{Uniform} and on the number of categories for \ttt{Categorical}. In the
64-bit rANS implementation which we wrote for our experiments, this limit is
\(2^{48}\).  This is not large enough to, for example, cover \(\Sn\) for large
\(n\), and therefore permutations must be encoded and decoded sequentially, see
\Cref{app:perm-codecs}. For details on the implementation of the primitive rANS
codecs listed above, see \citet{duda2009,townsend2021}.

\section{Codecs for unordered objects}\label{sec:method}
Our main contribution in this paper is a generic codec for unordered
objects, i.e.\ a codec respecting a given probability
distribution on \(\widetilde{\F}\). We first derive an expression for the optimal rate
that this codec should achieve, then in
\Cref{sec:unordered-codec-implementation} we describe the codec itself.


To help simplify the presentation, we will use the following generalization of
exchangeability from sequences of random variables to arbitrary permutable
classes:

\begin{definition}[Exchangeability]\label{def:exchangeability}
  For a probability distribution \(P\) defined on a permutable class \(\F\),
  we say that \(P\) is \emph{exchangeable} if isomorphic objects have equal
  probability under \(P\), i.e.\ if
  \begin{equation}\label{eq:exchangeability}
    f\simeq g\Rightarrow P(f) = P(g).
  \end{equation}
\end{definition}


We can assume, without loss of modeling power, that unordered objects are
generated by first generating an ordered object \(f\) from an exchangeable
distribution and then `forgetting' the order by projecting \(f\) onto its
isomorphism class \(\isoclass{f}\):
\begin{lemma}[Symmetrization]
  For any distribution \(Q\) on a class of unordered objects
  \(\widetilde{\F}\), there exists a unique exchangeable distribution \(P\) on
  ordered objects \(\F\) for which
  \begin{equation}\label{eq:induced-distr}
    Q(\isoclass{f}) = \sum_{g\in\isoclass{f}} P(g).
  \end{equation}
\end{lemma}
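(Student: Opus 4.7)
The plan is to prove both existence and uniqueness by an explicit construction, using the fact that exchangeability forces $P$ to be constant on each orbit. Since orbits partition $\F$, once we know the total mass $Q(\isoclass{f})$ that the unordered distribution assigns to a given orbit, exchangeability pins down the value of $P$ at every element of that orbit.

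For uniqueness, I would start by assuming $P$ is any exchangeable distribution satisfying \cref{eq:induced-distr}. Exchangeability means $P$ is constant on $\isoclass{f}$, so the sum on the right-hand side of \cref{eq:induced-distr} collapses to $|\isoclass{f}| \cdot P(f)$. Rearranging forces
\begin{equation}
  P(f) = \frac{Q(\isoclass{f})}{|\isoclass{f}|}
\end{equation}
for every $f \in \F$, so at most one such $P$ exists.

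For existence, I would define $P$ by the formula above and verify the three required properties. First, $P$ is well-defined because $\isoclass{f}$ depends only on the orbit of $f$, and every orbit of a finite group action on a finite set is nonempty and finite, so $|\isoclass{f}| \ge 1$. Second, $P$ is exchangeable: if $f \simeq g$ then $\isoclass{f} = \isoclass{g}$, so both $Q(\isoclass{f})$ and $|\isoclass{f}|$ depend only on the common orbit, and therefore $P(f) = P(g)$. Third, summing over the orbit recovers $Q$: $\sum_{g \in \isoclass{f}} P(g) = |\isoclass{f}| \cdot Q(\isoclass{f})/|\isoclass{f}| = Q(\isoclass{f})$, which is \cref{eq:induced-distr}. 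Finally, $P$ is a probability distribution because summing over $\F$ by partitioning into orbits gives $\sum_{f \in \F} P(f) = \sum_{\isoclass{f} \in \widetilde{\F}} |\isoclass{f}| \cdot Q(\isoclass{f})/|\isoclass{f}| = \sum_{\isoclass{f}} Q(\isoclass{f}) = 1$.

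There is no substantive obstacle here; the only subtlety is ensuring that dividing by $|\isoclass{f}|$ is legitimate, which holds because the ambient class is built from a finite-group action (the action of $\Sn$) so every orbit has positive, finite size. The argument would extend with minor care to permutable classes in which $\F$ itself is infinite (e.g.\ parametrized by $n$), provided we interpret sums as being over a single fixed $n$, which matches \Cref{def:perm-class}.
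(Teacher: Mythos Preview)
Your proposal is correct and follows essentially the same approach as the paper: define \(P(f) \coloneqq Q(\isoclass{f})/\abs{\isoclass{f}}\) for existence, and observe that exchangeability forces \(P\) to be uniform on each orbit for uniqueness. Your write-up is simply more explicit than the paper's terse version, in particular you spell out the verification that \(P\) sums to \(1\), which the paper leaves implicit.
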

\begin{proof}
  For existence, set \(P(f)\coloneqq
  Q(\isoclass{f})/\abs{\isoclass{f}}\) for \(f\in\F\), and note that
  \(g\in\isoclass{f}\Rightarrow\isoclass{g} = \isoclass{f}\).
  For uniqueness, note that \cref{def:exchangeability} implies
  that the restriction of \(P\) to any particular class must be uniform, which
  completely determines \(P\).
\end{proof}

We will model real-world permutable objects using an exchangeable model,
which will play the role of \(P\) in \cref{eq:induced-distr}. To further
simplify our rate expression we will also need the following application of the
orbit-stabilizer theorem (see \Cref{app:OS-theorem} for more detail), which is
visualized in \Cref{fig:OS-visualization}:
\begin{lemma}\label{lemma:OS-aut}
  Given a permutable class \(\F\), for each object \(f\in\F\), there is a
  fixed bijection between the left cosets of \(\Aut(\canonized{f})\) in \(\Sn\)
  and the isomorphism class \(\isoclass{f}\). This is induced by the function
  \(\theta_f : \Sn\rightarrow \F\) defined by \(\theta_f(s) \coloneqq s\cdot
  \canonized{f}\).  This implies that
  \begin{equation}\label{eq:OS-permutable-object}
    \abs{\isoclass{f}} = \frac{\abs{\Sn}}{\abs{\Aut(f)}} =
    \frac{n!}{\abs{\Aut(f)}}.
  \end{equation}
\end{lemma}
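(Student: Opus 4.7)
The plan is to recognize this as the classical orbit–stabilizer theorem applied to the left action of $\Sn$ on $\F$, with a single extra ingredient — that $\Aut(f)$ and $\Aut(\canonized{f})$ have the same order — to get the formula in terms of $\Aut(f)$ rather than $\Aut(\canonized{f})$.

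First I would analyze the map $\theta_f(s) = s\cdot\canonized{f}$. Since $\canonized{f}\simeq f$, the image of $\theta_f$ is the orbit of $\canonized{f}$, which is exactly $\isoclass{f}$, so $\theta_f$ is surjective onto $\isoclass{f}$. Next I would identify the fibers of $\theta_f$: for $s,t\in\Sn$, the equality $s\cdot\canonized{f}=t\cdot\canonized{f}$ is equivalent, after applying $t^{-1}$ on the left using the group-action axioms, to $t^{-1}s\cdot\canonized{f}=\canonized{f}$, i.e.\ $t^{-1}s\in\Aut(\canonized{f})$, i.e.\ $s$ and $t$ lie in the same left coset $t\,\Aut(\canonized{f})$. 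This is exactly the statement that $\theta_f$ descends to a well-defined injection on the quotient $\Sn/\Aut(\canonized{f})$, giving the claimed bijection between left cosets and $\isoclass{f}$.

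The first equality in \cref{eq:OS-permutable-object} is then Lagrange's theorem: the number of left cosets of $\Aut(\canonized{f})$ in $\Sn$ is $\abs{\Sn}/\abs{\Aut(\canonized{f})}$, and $\abs{\Sn}=n!$. To replace $\Aut(\canonized{f})$ by $\Aut(f)$ in the denominator, I would use that stabilizers of elements in the same orbit are conjugate: writing $\canonized{f}=s\cdot f$ for some $s\in\Sn$, a direct check using the action axioms gives $\Aut(\canonized{f}) = s\,\Aut(f)\,s^{-1}$, and conjugation is a bijection of groups, so $\abs{\Aut(\canonized{f})}=\abs{\Aut(f)}$.

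None of this is hard; the only thing to be careful about is the left/right coset convention, since the paper fixes a left action and composes permutations on the left. Writing out $\theta_f(s)=\theta_f(t)\iff t^{-1}s\in\Aut(\canonized{f})\iff s\in t\,\Aut(\canonized{f})$ pins this down unambiguously and ensures the bijection is with \emph{left} cosets as stated. The whole argument is a few lines once this bookkeeping is in place, so I do not anticipate a substantive obstacle.
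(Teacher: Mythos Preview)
Your proposal is correct and is exactly the orbit--stabilizer argument the paper invokes: the paper's own proof is a one-line appeal to the orbit--stabilizer theorem (stated and proved in the appendix) together with the definitions of \(\Aut\), \(\canonized{f}\), and \(\isoclass{f}\), and you have simply written that argument out in full. Your explicit conjugacy step showing \(\abs{\Aut(\canonized{f})}=\abs{\Aut(f)}\) is a detail the paper leaves implicit (equivalently, one can just apply orbit--stabilizer at the point \(f\) rather than at \(\canonized{f}\) to get the denominator \(\abs{\Aut(f)}\) directly).
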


\begin{proof}
  Follows directly from the orbit-stabilizer theorem (\cref{theorem:OS}) and
  the definitions of \(\Aut\), \(\canonized{f}\) and \(\isoclass{f}\).
\end{proof}

\begin{figure}[t]
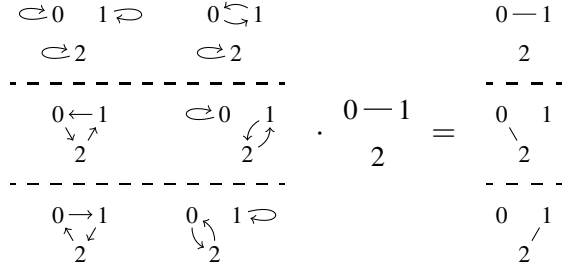

\centering
\(
\begin{array}{@{}cc@{}}
\graphThree{
  \draw [->] (0) edge[loop left] (0);
  \draw [->] (1) edge[loop right] (1);
  \draw [->] (2) edge[loop left] (2);
}
&
\graphThree{
  \draw [->] (0) edge[bend right] (1);
  \draw [->] (1) edge[bend right] (0);
  \draw [->] (2) edge[loop left] (2);
}
\\\addlinespace\hdashline\addlinespace
\graphThree{
  \draw [->] (1) -- (0);
  \draw [->] (2) -- (1);
  \draw [->] (0) -- (2);
}
&
\graphThree{
  \draw [->] (1) edge[bend right] (2);
  \draw [->] (2) edge[bend right] (1);
  \draw [->] (0) edge[loop left] (0);
}
\\\addlinespace\hdashline\addlinespace
\graphThree{
  \draw [->] (0) -- (1);
  \draw [->] (1) -- (2);
  \draw [->] (2) -- (0);
}
&
\graphThree{
  \draw [->] (0) edge[bend right] (2);
  \draw [->] (2) edge[bend right] (0);
  \draw [->] (1) edge[loop right] (1);
}
\end{array}
\quad\,
\scalebox{1.2}{\(
\cdot\;
\graphThree{\draw (0) -- (1);}
=\)}
\quad\,
\begin{array}{@{}c@{}}
\graphThree{\draw (0) -- (1);}\\\addlinespace\hdashline\addlinespace
\graphThree{\draw (0) -- (2);}\\\addlinespace\hdashline\addlinespace
\graphThree{\draw (1) -- (2);}
\end{array}
\)
\caption{
  Visualization of \cref{lemma:OS-aut}. For a fixed graph \(g\),
  the six elements \(s\in\mathcal{S}_3\) can be partitioned according to the
  value of \(s\cdot g\). The three sets in the partition are the left cosets of
  \(\Aut(g)\).
}\label{fig:OS-visualization}
\end{figure}

For any $f \in \F$, this allows us to express the right hand side of
\cref{eq:induced-distr} as:
\begin{equation}\label{eq:nice-Q-expression}
  \sum_{g\in\isoclass{f}} P(g)
    = \abs{\isoclass{f}} P(f)
    = \frac{n!}{\abs{\Aut(f)}}P(f)
\end{equation}
where the first equality follows from exchangeability of \(P\), and the second
from \cref{eq:OS-permutable-object}.  Finally, from
\cref{eq:induced-distr,eq:nice-Q-expression}, we can immediately write down the
following optimal rate expression, which a codec on unordered objects should achieve:
\begin{equation}\label{eq:rate}
  \log\frac{1}{Q(\isoclass{f})} =
  \underbracket[0.15ex]{\log\frac{1}{P(f)}}_{\text{Ordered rate}} -
  \underbracket[0.15ex]{\log\frac{n!}{\abs{\Aut(f)}}}_\text{Discount}.
\end{equation}

Note that only the \(\log 1 / P(f)\) term depends on the choice of model. The
\(\log (n!/\abs{\Aut(f)})\) term can be computed directly from the data, and is
the `discount' that we get for compressing an \emph{unordered} object vs.\
compressing an ordered one. The discount is larger for objects which have a
smaller automorphism group, i.e.\ objects which \emph{lack symmetry}. It can be
shown that almost all simple graphs have a trivial automorphism group for large
enough \(n\), see e.g.\ \citet[Chapter 9]{bollobás_2001}, and thus in practice
the discount is usually equal to or close to \(\log n!\).


\subsection{
  Achieving the target rate for unordered objects
}\label{sec:unordered-codec-implementation}
How can we achieve the optimal rate in \cref{eq:rate}? In \cref{app:ordered-codecs}
we give examples of codecs for ordered strings and simple graphs which achieve
the `ordered rate'. To operationalize the negative `discount' term, we can use
the `bits-back with ANS' method introduced by \citet{townsend2019}, the key
idea being to \emph{decode} an ordering as part of an \emph{encode} function
(see line 3 in the code below).

The value of the negative term in the rate provides a hint at how exactly to
decode an ordering: the discount is equal to the logarithm of the number of
cosets of \(\Aut(\canonized{f})\) in \(\Sn\), so a uniform codec for those
cosets will consume exactly that many bits. \Cref{lemma:OS-aut} tells us that
there is a direct correspondence between the cosets of \(\Aut(\canonized{f})\)
and the set \(\isoclass{f}\), so if we uniformly decode a choice of coset, we
can reversibly map that to an ordering of \(f\).

The following is an implementation of shuffle coding, showing, on the right,
the effect of the steps on message length.
\begin{pymathnumberedcode}
def encode(m, f):                                    |$\text{Effect on message length:}$|
  f_canon = action_apply(canon_perm(f), f)
  m, s = UniformLCoset(f_canon.aut).decode(m)        |$-\log\frac{n!}{\abs{\Aut(f)}}$|
  g = action_apply(s, f_canon)
  m = P.encode(m, g)                                 |$+\log\frac{1}{P(f)}$|
  return m

def decode(m):
  m, g = P.decode(m)
  s_ = inv_canon_perm(g)
  f_canon = action_unapply(s_, f)
  m = UniformLCoset(f_canon.aut).encode(m, s_)
  return m, f_canon
\end{pymathnumberedcode}
The \ttt{encode} function accepts a pair \ttt{(m, f)}, and reversibly
\emph{decodes} a random choice \ttt{g} from the isomorphism class of \ttt{f}.
This is done using a uniform codec for left cosets, \ttt{UniformLCoset}, which
we discuss in detail in \Cref{app:perm-codecs}. The canonization on line 2 is
necessary so that the decoder can recover the chosen coset and encode it on
line 12. While the codec technically maps between $M \times \widetilde{\F}$ and $M$, we avoid representing equivalence classes explicitly as sets, and
instead use a single element of the class as a representative. Thus the encoder
accepts any \ttt{f} in the isomorphism class being encoded, and the decoder
then returns the canonization of \ttt{f}. Similarly,
\ttt{UniformLCoset.encode} accepts any element of the coset, and
\ttt{UniformLCoset.decode} returns a canonical coset element.

\subsection{Initial bits}\label{sec:initial-bits}
The increase in message length from shuffle coding is equal to the optimal rate
in \cref{eq:rate}. However, the decode step on line 3 of the
\ttt{encode} function assumes that there is already some information in the
message which can be decoded. At the very beginning of encoding, these `initial
bits' can be generated at random, but they are unavoidably encoded into the
message, meaning that for the first object, the discount is not realized.
This constant initialization overhead means that the rate, when compressing
only one or a few objects, is not optimal, but tends to the optimal rate if
more objects are compressed, as the overhead is amortized.

\section{Related work}\label{sec:related}
To date, there has been a significant amount of work on compression of what we
refer to as `ordered' graphs, see \citet{besta2019} for a comprehensive survey.
Compression of `unordered' graphs, and unordered objects in general, has been
less well studied, despite the significant potential benefits of removing order
information (see \Cref{tab:results-summary}).  The work of \citet{varshney2006}
is the earliest we are aware of to discuss the theoretical bounds for
compression of sets and multisets, which are unordered strings.

\citet{choi2012} discuss the optimal rate for unordered graphs (a special
case of our eq.\ \ref{eq:rate}), and present a compression method called
`structural ZIP' (SZIP), which asymptotically achieves the rate
\begin{equation}
  \log\frac{1}{P_\mathrm{ER}(g)} - n\log n + O(n),
\end{equation}
where \(P_\mathrm{ER}\) is the Erdős-Rényi \(G(n, p)\) model.  Compared to our
method, SZIP is less flexible in the sense that it only applies to simple
graphs (without vertex or edge attributes), and it is not an entropy coding
method, thus the model \(P_\mathrm{ER}\) cannot be changed easily. On the other
hand, SZIP can achieve good rates on single graphs, whereas, because of the
initial bits issue (see \Cref{sec:initial-bits}), our method only achieves the
optimal rate on \emph{sequences} of objects. We discuss this issue further and
provide a quantitative comparison in \Cref{sec:experiments}.

\citet{steinruecken2014b,steinruecken2015,steinruecken2016} provides a range of
specialized methods for compression of various ordered and unordered
permutable objects, including multisets, permutations, combinations and
compositions.
Steinruecken's approach is similar to ours in that explicit probabilistic
modeling is used, although different methods are devised for each kind of
object rather than attempting a unifying treatment as we have done.

Our method can be viewed as a generalization of the framework for multiset
compression presented in \citet{severo2023}, which also used `bits-back with
ANS' \citep[BB-ANS;][]{townsend2019,townsend2021}.
\citet{severo2023} use interleaving to reduce the
initial bits overhead and achieve an optimal rate when compressing a
\emph{single} multiset (which can also be applied to a sequence of multisets), whereas the method presented in this paper is optimal
only for sequences of unordered objects (including sequences of multisets).
However, as mentioned in \Cref{sec:introduction}, their method only works for
multisets and not for more general unordered objects.  

There are a number of recent works on deep generative modeling of graphs (see
\citet{zhu2022} for a survey), which could be applied to entropy coding to
improve compression rates. Particularly relevant is \citet{chen2021}, who
optimize an evidence lower-bound (ELBO), equivalent to an upper-bound on the
rate in \cref{eq:rate}, when \(P\) is
not exchangeable.  Finally, the `Partition and Code'
\citep[PnC;][]{bouritsas2021} method uses neural networks to compress unordered
graphs. We compare to PnC empirically in \Cref{tab:PnC-comparison}.  PnC is
also specialized to graphs, although it does employ probabilistic modeling to
some extent.


\section{Experiments}\label{sec:experiments}

To demonstrate the method experimentally, we first applied it to the TUDatasets
graphs \citep{Morris+2020}, with a very simple Erdős-Rényi \(G(n, p)\) model
for \(P\).  \Cref{tab:results-summary} shows a summary, highlighting the
significance of the discount achieved by shuffle coding. We compressed a
dataset at a time (note that for each high-level graph type there are multiple
datasets in TUDatasets).

To handle graphs with discrete vertex and edge attributes, we treated all
attributes as independent and identically distributed (i.i.d.) within each
dataset. For each dataset, the codec computes and encodes a separate empirical
probability vector for vertices and edges, as well as an empirical \(p\)
parameter, and the size \(n\) of each graph. We use run-length encoding for these meta-data, described in detail in
\Cref{sec:param-coding}. Some datasets in TUDatasets contain graphs with
continuous attributes. We did not encode these attributes, since for these
values lossy compression would usually be more appropriate, and the focus of
this work is on lossless.

\begin{table}[ht]
  \centering
  \caption{
    For the TUDatasets, this table shows the significance of the discount term -
    in \cref{eq:rate}. With an Erdős-Rényi (ER) model, with edge probability
    adapted to each dataset, the percentage improvement (Discount) is the
    difference between treating the graph as ordered (Ordered ER) and using
    Shuffle coding to forget the order (Shuffle coding ER). Rates are measured
    in bits per edge.
  }\label{tab:results-summary}
  \begin{tabular}{@{}lccc@{}}
    \toprule
    Graph type     &Ordered ER&Shuffle coding ER&Discount\\
    \midrule
    Small molecules&2.11      &1.14             &46\%    \\
    Bioinformatics &9.20      &6.50             &29\%    \\
    Computer vision&6.63      &4.49             &32\%    \\
    Social networks\tablefootnote{
      Three of the 24 social network datasets, REDDIT-BINARY, REDDIT-MULTI-5K,
      REDDIT-MULTI-12K, were excluded because compression running time was too
      long.
    }              &3.98      &2.97             &26\%    \\
    Synthetic      &5.66      &2.99             &47\%    \\
    \bottomrule
  \end{tabular}
\end{table}

We also compared directly to \citet{bouritsas2021}, who used a more
sophisticated neural method to compress graphs (upper part of
\Cref{tab:PnC-comparison}). They reported results for six of the datasets from
the TUDatasets with vertex and edge attributes removed, and for two of the six
they reported results which included vertex and edge attributes. Because
PnC requires training, it was evaluated on a random test subset of each
dataset, whereas shuffle coding was evaluated on entire datasets.

We found
that for some types of graphs, such as the bioinformatics and social network
graphs, performance was significantly improved by using a Pólya urn (PU)
preferential attachment model for ordered graphs introduced by \citet{severo2023rec}.
In this model, a sequence of edges is sampled, where the probability of an edge being connected to a specific node is approximately proportional to the number of edges already connected to that node. Such a `rich-get-richer' dynamic is plausibly present in the formation of many real-world graphs, explaining the urn model's good performance.
It treats edges as a set, and we
were able to use an inner shuffle codec for sets to encode the edges,
demonstrating the straightforward compositionality of shuffle coding. See
\Cref{app:pu} for details. The average initial bit cost per TU dataset in \Cref{tab:PnC-comparison} is $0.01$ bits per edge for both ER and PU, demonstrating good amortization.

As mentioned in \Cref{sec:related}, SZIP achieves a good rate for single
graphs, whereas shuffle coding is only optimal for sequences of graphs.
In the lower part of \Cref{tab:PnC-comparison}, we compare the `net
rate', which is the increase in message length from shuffle coding the graphs,
assuming some existing data is already encoded into the message.  The fact that
shuffle coding `just works' with any statistical model for ordered
graphs is a major advantage of the method, as demonstrated by the fact that
we were easily able to improve on the Erdős-Rényi results by swapping in a
recently proposed model.

\newcolumntype{C}{>{\centering}p{3.5em}}
\begin{table}[ht]
  \caption{
    Comparison between shuffle coding, with Erdős-Rényi (ER) and our Pólya urn (PU)
    models, and the best results obtained by PnC \citep{bouritsas2021} and SZIP
    \citep{choi2012} for each dataset. We also show the discount realized by shuffle coding. Each SZIP comparison is on a
    single graph, and thus for shuffle coding we report the optimal (\emph{net})
    compression rate, that is the additional cost of compressing that graph
    assuming there is already some compressed data to append to. All
    measurements are in bits per edge.
  }\label{tab:PnC-comparison}
  \centering
  \begin{tabular}{@{}llCCCr@{}l@{}}
    \toprule
                                &&&\multicolumn{2}{c}{Shuffle coding}\\ \cmidrule(lr){4-5}
    &Dataset                     &Discount&ER        &PU         &\multicolumn{2}{c}{PnC}\\\midrule
    \tbf{Small molecules} & MUTAG                       &2.77&\tbf{1.88}&2.66       &2.45&\(\pm\)0.02       \\
    &MUTAG (with attributes)     &2.70&\tbf{4.20}&4.97       &4.45&                  \\
    &PTC\_MR                     &2.90&\tbf{2.00}&2.53       &2.97&\(\pm\)0.14       \\
    &PTC\_MR (with attributes)   &2.87&\tbf{4.88}&5.40       &6.49&\(\pm\)0.54       \\
    &ZINC\_full                  &3.11&\tbf{1.82}&2.63       &1.99&                  \\
    \tbf{Bioinformatics}&PROTEINS                    &2.48&3.68      &\tbf{3.50} &3.51&\(\pm\)0.23 \\
    \tbf{Social networks}&IMDB-BINARY                 &0.97&2.06      &1.50       &\tbf{0.54}&            \\
    &IMDB-MULTI                  &0.88&1.52      &1.14       &\tbf{0.38}&
    \\\addlinespace\midrule\addlinespace
    &                            &&\multicolumn{2}{c}{Shuffle coding (net)}\\ \cmidrule(lr){4-5}
    &Dataset                     &Discount&ER        &PU &\multicolumn{2}{c}{SZIP}\\\midrule
    \tbf{SZIP}&Airports (USAir97)          &1.12&5.09      &\tbf{2.90} &\multicolumn{2}{c}{3.81}\\
    &Protein interaction (YeastS)&3.55&6.84      &\tbf{5.70} &\multicolumn{2}{c}{7.05}\\
    &Collaboration (geom)        &3.45&8.30      &\tbf{4.41} &\multicolumn{2}{c}{5.28}\\
    &Collaboration (Erdos)       &7.80&7.00      &\tbf{4.37} &\multicolumn{2}{c}{5.08}\\
    &Genetic interaction (homo)  &3.97&8.22      &\tbf{6.77} &\multicolumn{2}{c}{8.49}\\
    &Internet (as)               &7.34&8.37      &\tbf{4.47} &\multicolumn{2}{c}{5.75}\\
    \bottomrule
  \end{tabular}
\end{table}

We report speeds in \Cref{app:speed}. Our implementation has not yet been optimized. 
One thing
that will not be easy to speed up is canonical ordering, since for this we use
the \ttt{nauty} and \ttt{Traces} libraries, which have already been heavily
optimized. Fortunately, those calls are currently only 10 percent of the
overall time, and we believe there is significant scope for optimization of the
rest.


\section{Limitations and future work}
\tbf{Time complexity.} Shuffle coding relies on computing an object's canonical ordering and automorphism group, for which no polynomial-time algorithm is known for graphs.
In consequence, while \ttt{nauty} and \ttt{Traces} solve this problem efficiently for various graph classes, it is impractical in the worst case. This limitation can be overcome by approximating an object's canonical ordering, instead of calculating it exactly. This introduces a trade-off between speed and compression rate in
the method, and lowers runtime complexity to polynomial time. We leave
a detailed description of that more sophisticated method to future work.

\tbf{Initial bits.} A limitation of the version of shuffle coding presented in this paper is
that it only achieves an optimal rate for sequences; the rate `discount'
cannot be realized in the one-shot case, as explained in \Cref{sec:initial-bits}. However, it is possible to overcome this by
interleaving encoding and decoding steps, as done in the `bit-swap' method of \citet{kingma2019}. Information can be eagerly encoded during the progressive decoding of the coset, reducing the initial bits needed by shuffle coding from \(O(\log
n!)\) to \(O(\log n)\). This is a generalization of the multiset coding method described by \citet{severo2023}. We again defer
a detailed description to future work. 

\tbf{Models.} Unlike PnC, we do not rely on compute-intensive learning or hyperparameter tuning. Shuffle coding achieves state-of-the-art compression rates when using simple models with minimal parameters. There is currently active research on deep generative models for graphs,
see \citet{zhu2022} for a survey. We expect improved rates for shuffle coding when combined with such neural models.

\section{Conclusion}
A significant proportion of the data which needs to be communicated and stored
is fundamentally unordered. We have presented shuffle coding, the first general
method which achieves an optimal rate when compressing sequences of unordered
objects. We have also implemented experiments which demonstrate the practical
effectiveness of shuffle coding for compressing many kinds of graphs, including
molecules and social network data. We look forward to future work applying the
method to other forms of unordered data, and applying more sophisticated
probabilistic generative models to gain improvements in compression rate.

\subsubsection*{Acknowledgments}
James Townsend acknowledges funding under the project VI.Veni.212.106, financed
by the Dutch Research Council (NWO). We thank Ashish Khisti for discussions
and encouragement, and Heiko Zimmermann for feedback on the paper.

\pagebreak

\printbibliography
\pagebreak
\appendix
\section{Group actions, orbits and stabilizers}\label{app:OS-theorem}
This appendix gives the definitions of group actions, orbits and stabilizers as
well as a statement and proof of the orbit-stabilizer theorem, which we make
use of in \cref{sec:method}.  We use the shorthand \(H\le G\) to mean that
\(H\) is a subgroup of \(G\), and for \(g\in G\), we use the usual notation,
\(gH\coloneqq\{gh \mid h\in H\}\) and \(Hg\coloneqq\{hg \mid h\in H\}\) for
left and right cosets, respectively.

\begin{definition}[Group action]
  For a set \(X\) and a group \(G\), a \emph{group action}, or simply
  \emph{action}, is a binary operator
  \begin{equation}
    \cdot_G : G\times X \rightarrow X
  \end{equation}
  which respects the structure of \(G\) in the following sense:
  \begin{enumerate}
    \item The identity element \(e\in G\) is neutral, that is \(e\cdot_G x =
      x\).
    \item The operator \(\cdot_G\) respects composition. That is, for \(g, h\in
      G\),
      \begin{equation}g\cdot_G(h\cdot_G x) = (gh)\cdot_G x.\end{equation}
  \end{enumerate}
  We will often drop the subscript \(G\) and use infix \(\cdot\) alone where
  the action is clear from the context.
\end{definition}

\begin{definition}[Orbit]
  An action of a group \(G\) on a set \(X\) induces an equivalence relation
  \(\sim_G\) on \(X\), defined by
  \begin{equation}
    x\sim_G y\quad\text{if and only if there exists}\quad g\in G\quad\text{such
    that}\quad y = g\cdot x.
  \end{equation}
  We refer to the equivalence classes induced by \(\sim_G\) as \emph{orbits},
  and use \(\Orb_G(x)\) to denote the orbit containing an element \(x\in
  X\). We use \(X/G\) to denote the set of orbits, so for each \(x\in X\),
  \(\Orb_G(x)\in X/G\).
\end{definition}

\begin{definition}[Stabilizer subgroup]
  For an action of a group \(G\) on a set \(X\), for each \(x\in X\), the
  \emph{stabilizer}
  \begin{equation}
    \Stab_G(x) \coloneqq \{g\in G \mid g\cdot x = x\}
  \end{equation}
  forms a subgroup of \(G\).
\end{definition}

We make use of the orbit-stabilizer theorem in \Cref{sec:method}. Here we give
a statement and brief proof of this well-known theorem.

\begin{theorem}[Orbit-stabilizer theorem]\label{theorem:OS}
  For an action of a finite group \(G\) on a set \(X\), for each \(x\in
  X\), the function \(\theta_x \colon G\rightarrow X\) defined by
  \begin{equation}
    \theta_x(g) \coloneqq g\cdot x
  \end{equation}
  induces a bijection from the left cosets of \(\Stab_G(x)\) to \(\Orb_G(x)\).
  This implies that the orbit \(\Orb_G(x)\) is finite and
  \begin{equation}
    \abs{\Orb_G(x)} = \frac{\abs{G}}{\abs{\Stab_G(x)}}.
  \end{equation}
\end{theorem}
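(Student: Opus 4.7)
The plan is to establish the bijection via Lagrange-style reasoning: show that $\theta_x$ is constant on left cosets of $\Stab_G(x)$, and that two group elements lie in the same coset exactly when $\theta_x$ sends them to the same point. Together these facts give a well-defined bijection between $G/\Stab_G(x)$ and $\Orb_G(x)$, from which the cardinality statement follows by basic counting.

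First I would verify that $\theta_x$ factors through the quotient by $\Stab_G(x)$. Writing $H \coloneqq \Stab_G(x)$, for any $g \in G$ and $h \in H$, the action axioms give $\theta_x(gh) = (gh)\cdot x = g \cdot (h \cdot x) = g \cdot x = \theta_x(g)$, so $\theta_x$ is constant on the left coset $gH$. Hence there is a well-defined map $\bar\theta_x : G/H \to X$ sending $gH \mapsto g \cdot x$.

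Next I would check that $\bar\theta_x$ has image exactly $\Orb_G(x)$ and is injective. Surjectivity onto the orbit is immediate from the definition of $\Orb_G(x)$ as the set of elements of the form $g \cdot x$. For injectivity, suppose $g \cdot x = g' \cdot x$. Acting on both sides by $g^{-1}$ and using the action axioms yields $(g^{-1}g') \cdot x = x$, so $g^{-1}g' \in H$, which means $g' \in gH$, i.e.\ $gH = g'H$. Therefore $\bar\theta_x$ is a bijection $G/H \to \Orb_G(x)$.

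Finally, the cardinality equality follows because the left cosets of $H$ partition $G$ into $|G|/|H|$ blocks of equal size $|H|$ (a consequence of left multiplication by any $g$ being a bijection $H \to gH$). Since $G$ is finite, $\Orb_G(x)$ is finite and $|\Orb_G(x)| = |G/H| = |G|/|\Stab_G(x)|$. No step here presents a real obstacle; the only point requiring slight care is invoking the action axioms in the right order (associativity and identity) when checking well-definedness and injectivity, but these are routine applications of the definition of a group action.
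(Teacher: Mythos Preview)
Your proposal is correct and follows essentially the same approach as the paper: both define the induced map on left cosets of $\Stab_G(x)$, verify well-definedness and injectivity via the equivalence $g\cdot x = g'\cdot x \iff g^{-1}g' \in \Stab_G(x)$, and obtain surjectivity directly from the definition of the orbit. The only cosmetic differences are that you separate well-definedness and injectivity into two steps (the paper combines them into a single chain of equivalences) and you spell out the Lagrange counting argument explicitly, which the paper leaves implicit.
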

\begin{proof}
  We show that \(\theta_f\) induces a well defined function on the left-cosets
  of \(\Stab_G(x)\), which we call \(\tilde{\theta}_f\). Specifically, we
  define
  \begin{equation}
    \tilde{\theta}_f(g\Stab_G(x)) \coloneqq g \cdot x,
  \end{equation}
  and show that \(\tilde{\theta}_f\) is injective and surjective.

  To see that \(\tilde{\theta}_f\) is well defined and injective, note that
  \begin{align}
    h\in g\Stab_G(x)
      &\iff g^{-1}h\in \Stab_G(x)\\
      &\iff g^{-1}h\cdot x = x\\
      &\iff g\cdot x = h\cdot x,
  \end{align}
  using the definition of \(\Stab_G\).

  For surjectivity, we have
  \begin{align}
    y\in \Orb_G(x)
      &\implies \exists g\in G\;\mathrm{s.t.}\; y = g\cdot x\\
      &\implies y = \tilde{\theta}_f(g\Stab_G(x))
  \end{align}
  using the definition of \(\Orb_G\).
\end{proof}

In \Cref{sec:Autcodec}, it will be helpful to have an explicit bijection
between \(G\) and the Cartesian product \(\Orb_G(x)\times\Stab_G(x)\). This
requires a way of selecting a canonical element from each left coset of
\(\Stab_G(x)\) in \(G\). This is similar to the canonical ordering of
\cref{def:canon-order}:

\begin{definition}[Transversal]\label{def:transversal}
  For a group \(G\) with subgroup \(H\le G\), a \emph{transversal} of the
  left cosets of \(H\) in \(G\) is a mapping \(t:G\rightarrow G\) such that
  \begin{enumerate}
    \item For all \(g\in G\), we have \(t(g) \in gH\).
    \item For all \(f, g\in G\), if \(f\in gH\), then \(t(f) = t(g)\).
  \end{enumerate}
\end{definition}

Given such a transversal, we can setup the bijection mentioned above:
\begin{lemma}\label{lemma:OS-bijection}
  Let \(G\) be a group acting on a set \(X\). If, for \(x\in X\), we have a
  transversal \(t_x\) of the left cosets of \(\Stab_G(x)\) in \(G\), then we
  can form an explicit bijection between \(G\) and
  \(\Orb_G(x)\times\Stab_G(x)\).
\end{lemma}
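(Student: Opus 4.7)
The plan is to lift the coset-orbit bijection from \cref{theorem:OS} into a full bijection with $G$, using the transversal $t_x$ to canonically choose a representative within each coset.

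First I would define the forward map $\phi \colon G \to \Orb_G(x) \times \Stab_G(x)$ by $\phi(g) \coloneqq (g \cdot x,\, t_x(g)^{-1} g)$. The first component is simply the orbit image of $g$; the second is the stabilizer element needed to correct the canonical coset representative $t_x(g)$ back to $g$. Property 1 of \cref{def:transversal} gives $t_x(g) \in g \Stab_G(x)$, so $t_x(g) = g h$ for some $h \in \Stab_G(x)$, and therefore $t_x(g)^{-1} g = h^{-1} \in \Stab_G(x)$, making $\phi$ well-defined.

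Second, I would construct the inverse via a canonical lift $r \colon \Orb_G(x) \to G$, defined by picking any $g$ with $g \cdot x = y$ and setting $r(y) \coloneqq t_x(g)$. This is well-defined because \cref{theorem:OS} shows that the orbit element $y$ uniquely determines the coset $g \Stab_G(x)$, while property 2 of \cref{def:transversal} says that $t_x$ is constant on each coset. The inverse map is then $\psi \colon (y, s) \mapsto r(y)\, s$.

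Finally I would verify that $\phi$ and $\psi$ are mutual inverses. For $\psi \circ \phi$: by construction $r(g \cdot x) = t_x(g)$, so $\psi(\phi(g)) = t_x(g) \cdot t_x(g)^{-1} g = g$. For $\phi \circ \psi$: since $r(y) s$ lies in the coset $r(y) \Stab_G(x)$, property 2 yields $t_x(r(y) s) = r(y)$, so the stabilizer component is $r(y)^{-1} r(y) s = s$; the orbit component is $r(y) s \cdot x = r(y) \cdot x = y$ because $s$ stabilizes $x$. I do not expect any genuine obstacle, since the whole argument is bookkeeping around \cref{theorem:OS} and \cref{def:transversal}. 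The only slightly delicate point is confirming that $r$ is well-defined independently of the choice of $g$, which is exactly where both defining properties of the transversal come into play.
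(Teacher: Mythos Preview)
Your proof is correct, and your forward map \(\phi\) is exactly the paper's \(\phi_x\), with the same well-definedness argument via property~1. The difference lies in how bijectivity is established. The paper proves injectivity directly---if \(\phi_x(f)=\phi_x(g)\) then \(f\cdot x=g\cdot x\) forces \(f,g\) into the same coset, so property~2 gives \(t_x(f)=t_x(g)\), and equality of the second components then yields \(f=g\)---and concludes surjectivity from the cardinality identity \(\abs{G}=\abs{\Orb_G(x)}\,\abs{\Stab_G(x)}\) supplied by \cref{theorem:OS}. You instead build the inverse \(\psi\) explicitly and verify both compositions. Your route is slightly longer but buys two things: it produces the inverse in closed form, which is precisely what is needed when the bijection is used operationally in \Cref{sec:Autcodec}, and it does not depend on \(G\) being finite, whereas the paper's counting step tacitly does. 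One minor tightening: when you write that property~2 yields \(t_x(r(y)s)=r(y)\), property~2 strictly gives only \(t_x(r(y)s)=t_x(r(y))\); you then need the idempotence \(t_x(r(y))=r(y)\), which follows because \(r(y)=t_x(g)\) lies in the same coset as \(g\) by property~1 and hence is fixed by \(t_x\) by property~2.
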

\begin{proof}
  For \(g\in G\), let
  \begin{align}
    o_x(g) &\coloneqq g\cdot x       \\
    s_x(g) &\coloneqq t_x(g)^{-1}g,
  \end{align}
  then \(o_x\in \Orb_G(x)\). By condition 1 in \cref{def:transversal}, there
  exists \(h\in\Stab_G(x)\) such that \(t(g) = gh\), and in particular \(s_x(g)
  = h\in \Stab_G(x)\). So there is a well-defined function \(\phi_x(g)
  \coloneqq (o_x(g), s_x(g))\) with \(\phi_x : G \rightarrow
  \Orb_G(x)\times\Stab_G(x)\).

  To see that \(\phi_x\) is injective, suppose that \(\phi_x(f) = \phi_x(g)\).
  Then \(o_x(f) = o_x(g)\), so \(f\cdot x = g\cdot x\), and therefore \(f\in
  g\Stab_G(x)\). Condition 2 in \cref{def:transversal} implies that
  \(t_x(f)=t_x(g)\), and since \(s_x(g) = s_x(f)\) we have \(t_x(f)^{-1}f =
  t_x(g)^{-1}g\), so \(f = g\).

  The orbit-stabilizer theorem implies that \(\abs{G} =
  \abs{\Orb_G(x)}\abs{\Stab_G(x)}\), and therefore if \(\phi_x\) is injective
  it must also be bijective.
\end{proof}

\section{Codecs for ordered objects}\label{app:ordered-codecs}

Codecs for strings and graphs can be composed from the primitive codecs introduced in \cref{sec:codecs}:

\hfill\begin{minipage}[t]{0.48\textwidth}
\begin{pycodesmall}
def String(ps, length):
  def encode(m, string):
    assert len(string) == length
    for c in reversed(string):
      m = Categorical(ps).encode(m, c)
    return m

  def decode(m):
    string = []
    for _ in range(length):
      m, c = Categorical(ps).decode(m)
      string.append(c)
    return m, str(string)
  return Codec(encode, decode)
\end{pycodesmall}
\end{minipage}\vline\hfill%
\begin{minipage}[t]{0.45\textwidth}
\begin{pycodesmall}
def ErdosRenyi(n, p):
  def encode(m, g):
    assert len(g) == n
    for i in reversed(range(n)):
      for j in reversed(range(i)):
        e = g[i][j]
        m = Bernoulli(p).encode(m, e)
    return m

  def decode(m):
    g = []
    for i in range(n):
      inner = []
      for j in range(i)
        m, e = Bernoulli(p).decode(m)
        inner.append(e)
      g.append(inner)
    return (m, g)
  return Codec(encode, decode)
\end{pycodesmall}
\end{minipage}

Left: Codec for fixed-length strings implemented by applying a
\ttt{Categorical} codec to each character. Right: Codec for graphs respecting
an Erdős-Rényi distribution \(G(n, p)\), implemented by applying the
\ttt{Bernoulli} codec to each edge.

\section{
A uniform codec for cosets of a permutation group
}\label{app:perm-codecs}
Shuffle coding, as described in \Cref{sec:method}, requires that we can
encode and decode left cosets in \(\Sn\) of the automorphism group of a
permutable object. In this appendix we describe a codec for cosets of an
\emph{arbitrary} permutation group characterized by a list of generators. We
first describe the codec, which we call \ttt{UniformLCoset}, on a high level
and then in \Cref{sec:Sncodec,sec:Autcodec}, we describe the two main
components in more detail.

The optimal rate for a uniform coset codec is equal to the log of the number of
cosets, that is
\begin{equation}
  \log\frac{\abs{\Sn}}{\abs{H}} = \log n! - \log\abs{H}.
\end{equation}
This rate expression hints at an encoding method: to encode a coset, we first
decode a choice of element of the coset (equivalent to decoding a choice of
element of \(H\) and then multiplying it by a canonical element of the coset),
and then encode that chosen element using a uniform codec on \(\Sn\).  Note
that if the number of cosets is small we could simply encode the index of the
coset directly, but in practice this is rarely feasible.

The following is a concrete implementation of a left coset codec:

\begin{pymathnumberedcode}
def UniformLCoset(grp):                         |\textrm{Effects on} $l(m)$\textrm{:}|
  def encode(m, s):
    s_canon = coset_canon(grp, s)
    m, t = UniformPermGrp(grp).decode(m)        |$-\log\abs{H}$|
    u = s_canon * t
    m = UniformS(n).encode(m, u)                |$+\log(n!)$|
    return m

  def decode(m):
    m, u = UniformS(n).decode(m)
    s_canon = coset_canon(subgrp, u)
    t = inv(s_canon) * u
    m = UniformPermGrp(grp).encode(m, t)
    return m, s_canon
  return Codec(encode, decode)
\end{pymathnumberedcode}

The codecs \ttt{UniformS} and \ttt{UniformPermGrp} are described in
\Cref{sec:Sncodec} and \Cref{sec:Autcodec} respectively. \ttt{UniformS(n)} is a
uniform codec over the symmetric group \(\Sn\), and \ttt{UniformPermGrp} is a
uniform codec over elements of a given permutation group, i.e., a subgroup of
\(\Sn\).

We use a \emph{stabilizer chain}, discussed in more detail in
\Cref{sec:Autcodec}, which is a computationally convenient
representation of a permutation group. A stabilizer chain allows computation of
a transversal which can be used to canonize coset elements (line 3 and line 11
in the code above). Routines for constructing and working with stabilizer
chains are standard in computational group theory, and are implemented in SymPy
(\url{https://www.sympy.org/}), as well as the GAP system
(\url{https://www.gap-system.org/}), see \citet[Chapter 4]{holt2005} for theory
and description of the algorithms. The method we use for \ttt{coset\_canon} is
implemented in the function \ttt{MinimalElementCosetStabChain} in the GAP
system.

\subsection{
  A uniform codec for permutations in the symmetric group
}\label{sec:Sncodec}
We use a method for encoding and decoding permutations based on the
Fisher-Yates shuffle \citep[][139--140]{knuth1981}. The following is a Python
implementation:
\begin{pymathnumberedcode}
def UniformS(n):
  def swap(s, i, j):
    si_old = s[i]
    s[i] = s[j]
    s[j] = si_old

  def encode(m, s):
    p = list(range(n))
    p_inv = list(range(n))
    to_encode = []
    for j in reversed(range(2, n + 1)):
      i = p_inv[s[j - 1]]
      swap(p_inv, p[j - 1], s[j - 1])
      swap(p, i, j - 1)
      to_encode.append(i)

    for j, i in zip(range(2, n + 1), reversed(to_encode)):
      m = Uniform(j).encode(m)
    return m

  def decode(m):
    s = list(range(n))
    for j in reversed(range(2, n + 1)):
      m, i = Uniform(j).decode(m)
      swap(s, i, j - 1)
    return m, s
  return Codec(encode, decode)
\end{pymathnumberedcode}
The decoder closely resembles the usual Fisher-Yates sampling method, and the
encoder has been carefully implemented to exactly invert this process. Both
encoder and decoder have time complexity in \(O(n)\).
\subsection{
  A uniform codec for permutations in an arbitrary permutation group
}\label{sec:Autcodec}
For coding permutations from an arbitrary permutation group, we use the
following construction, which is a standard tool in computational group theory
(see \citet{seress2003,holt2005}):

\begin{definition}[Base, stabilizer chain]
  Let \(H\le\Sn\) be a permutation group, and \(B=(b_0,\ldots,b_{K-1})\) a list
  of elements of \([n]\). Let \(H_0 \coloneqq H\), and \(H_k \coloneqq
  \Stab_{H_{k-1}}(b_{k-1})\) for \(k=1,\ldots,K\). If \(H_K\) is the trivial
  group containing only the identity, then we say that \(B\) is a \emph{base}
  for \(H\), and the sequence of groups \(H_0,\ldots,H_K\) is a
  \emph{stabilizer chain} of \(H\) relative to \(B\).
\end{definition}
Bases and stabilizer chains are guaranteed to exist for all permutation groups,
and can be efficiently computed using the Schreier-Sims algorithm
\citep{sims1970}. The algorithm also produces a transversal for the left cosets
of each \(H_{k+1}\) in \(H_k\) for each \(k=0,\ldots,K-1\), in a form known as
a Schreier tree \citep{holt2005}.

If we define \(O_k\coloneqq\Orb_{H_k}(b_k)\), for \(k = 0,\ldots,K-1\), then
by applying the orbit-stabilizer theorem recursively, we have \(\abs{H} =
\prod_{k=0}^{K-1}\abs{O_k}\), which gives us a decomposition  of the optimal rate that a
uniform codec on \(H\) should achieve:
\begin{equation}\label{eq:Autrate}
  \log\abs{H} = \sum_{k=0}^{K-1}\log\abs{O_k}.
\end{equation}
Furthermore, by applying \cref{lemma:OS-bijection} recursively, using the
transversals produced by Schreier-Sims, we can construct an explicit bijection
between \(H\) and the Cartesian product \(\prod_{k=0}^{K-1}O_k\).
We use this bijection, along with a sequence of uniform codecs on
\(O_0,\ldots,O_{K-1}\) for coding automorphisms at the optimal rate in
\cref{eq:Autrate}. For further details refer to the implementation.


\section{Pólya urn model details} \label{app:pu}

We implemented Pólya urn models mostly as described in \citet{severo2023rec}, with few modifications. Differently to the original implementation, we apply shuffle coding to the list of edges, resulting in a codec for the set of edges.

We also disallow edge redraws and self-loops, leading to an improved rate, as shown in \cref{sec:ablations}. This change breaks edge-exchangeability, leading to a `stochastic' codec, meaning that the code length depends on the initial message. Shuffle coding is compatible with such models. In this more general setting, the ordered log-likelihood term in the optimal rate (eq. \ref{eq:rate}) is replaced with a variational `evidence lower bound' (ELBO). The discount term is unaffected. The derivations in the main text are based on the special case of exchangeable models, where log-likelihoods are exact, for simplicity. They can be generalized with little effort and new insight.

\section{Parameter coding details}\label{sec:param-coding}

All bit rates reported for our experiments include model parameters. Once per dataset, we code the following lists of natural numbers by coding both the list length and the bit count $\left\lceil{\log m}\right\rceil$ of the maximum element $m$ with a 46-bit and 5-bit uniform codec respectively, as well as each element of the list with a  codec respecting a log-uniform distribution in [0, $\left\lceil{\log m}\right\rceil$]:

\begin{itemize}
    \item 
    A list resulting from sorting the graphs' numbers of vertices, and applying run-length coding, encoding run lengths and differences between consecutive numbers of vertices.
    \item For datasets with vertex attributes: a list of all vertex attribute counts within a dataset.
    \item For datasets with edge attributes: a list of all edge attribute counts within a dataset.
    \item For Erdős-Rényi models: a list consisting of the following two numbers: the total number of edges in all graphs, and the number of vertex pairs that do not share an edge.
\end{itemize}

Coding these empirical count parameters allows coding the data according to maximum likelihood categorical distributions. For Pólya urn models, we additionally code the edge count for each graph using a uniform codec over $[0, \frac{1}{2}n(n - 1)]$, exploiting the fact that the vertex count $n$ is already coded as described above. For each dataset, we use a single bit to code whether or not self-loops are present and adapt the codec accordingly.



\section{Compression speed} \label{app:speed}

We show compression and decompression speeds of our experiments in \Cref{tab:speed}. These speeds include time needed for gathering dataset statistics and parameter coding. The results show that for our implementation, only a small fraction of runtime is spent on finding automorphism groups and canonical orderings with \ttt{nauty}.

\begin{table}[ht]
  \centering
  \caption{
    Compression and decompression speeds in kilobytes per second (kB/s) of shuffle coding with the Erdős-Rényi (ER) and Pólya urn (PU) models, for all previously reported TU and SZIP datasets. We show SZIP compression speeds calculated from the runtimes reported in \citet{choi2012} for comparison. All results are based on the ordered ER rate as the reference uncompressed size. All shuffle coding speeds are for a single thread on a MacBook Pro 2018 with a 2.7GHz Intel Core i7 CPU. We also report the share of time spent on \ttt{nauty} calls that determine the canonical
    ordering and generators of the automorphism group of all graphs.
  }\label{tab:speed}
  \begin{tabular}{@{}lrrrrrr@{}}
    \toprule
                   &     &\multicolumn{2}{c}{ER}&\multicolumn{2}{c}{PU}&\multicolumn{1}{c}{SZIP}\\ 
                   \cmidrule(lr){3-4}\cmidrule(l){5-6}\cmidrule(l){7-7}
    Dataset        &nauty  &encode &decode &encode &decode& encode\\\midrule\addlinespace\tbf{TU by type}\\
    Small molecules&15\%   &54     &56     &--     &--    &--    \\
    Bioinformatics &2\%    &51     &66     &--     &--    &--    \\
    Computer vision&3\%    &25     &28     &--     &--    &--    \\
    Social networks&<1\%   &0.440  &0.467  &--     &--    &--    \\
    Synthetic      &7\%    &98     &110    &--     &--    &--    \\\midrule\addlinespace\tbf{Small molecules}\\
    MUTAG                      &7\%    &115    &122    &51     &40    &--    \\
    MUTAG (with attributes)    &16\%   &135    &141    &67     &62    &--    \\
    PTC\_MR                    &8\%    &107    &103    &50     &45    &--    \\
    PTC\_MR (with attributes)  &18\%   &117    &125    &67     &62    &--    \\
    ZINC\_full                 &7\%    &105    &105    &50     &47    &--    \\\addlinespace\tbf{Bioinformatics}\\
    PROTEINS                   &3\%    &88     &94     &30     &30    &--    \\\addlinespace\tbf{Social networks}\\
    IMDB-BINARY                &4\%    &17     &18     &8      &8     &--    \\
    IMDB-MULTI                 &3\%    &11     &12     &6      &5     &--    \\\midrule\addlinespace\tbf{SZIP}\\
    Airports (USAir97)         &1\%    &82     &78     &5      &5     &164   \\
    Protein interaction (YeastS)&8\%   &2.442  &2.391  &1.238  &0.859 &77    \\
    Collaboration (geom)       &<1\%   &0.004  &0.005  &0.005  &0.005 &64    \\
    Collaboration (Erdos)      &15\%   &0.025  &0.025  &0.024  &0.024 &18    \\
    Genetic interaction (homo) &7\%    &0.180  &0.154  &0.117  &0.141 &32    \\
    Internet (as)              &13\%   &0.002  &0.003  &0.002  &0.002 &7     \\
  \end{tabular}
\end{table}

\section{Model ablations}\label{sec:ablations}

We present results of additional ablation experiments on the PnC datasets in \Cref{tab:ablation}. We do an ablation that uses a uniform distribution for vertex and edge attributes with an Erdős-Rényi model (unif. ER). There is a clear advantage to coding maximum-likelihood categorical parameters (ER), justifying it as the approach used throughout this paper. We also show the rates obtained by the original method proposed in \citet{severo2023rec} (PU redr.), demonstrating a clear rate advantage of our approach disallowing edge redraws and self-loops (PU) in the model.

\begin{table}[ht]
  \caption{Model ablations compared to PnC. All results are in bits per edge.}\label{tab:ablation}
  \centering
  \begin{tabular}{@{}lCCCCr@{}l@{}}
    \toprule
                                &\multicolumn{4}{c}{Shuffle coding}\\ \cmidrule(lr){2-5}
    Dataset                     &unif. ER        &ER&PU       &PU redr.         &\multicolumn{2}{c}{PnC}\\\midrule\addlinespace\tbf{Small molecules}\\
    MUTAG                       &--  &\textbf{1.88}&2.66             &2.81       &2.45&\(\pm\)0.02 \\
    MUTAG (with attributes)     &6.37&\textbf{4.20}&4.97         &5.13       &4.45&            \\
    PTC\_MR                     &--  &\textbf{2.00}&2.53             &2.74       &2.97&\(\pm\)0.14 \\
    PTC\_MR (with attributes)   &8.04&\textbf{4.88}&5.40         &5.61       &6.49&\(\pm\)0.54 \\
    ZINC\_full                  &--  &\textbf{1.82}&2.63             &2.75       &1.99&                  \\\addlinespace\tbf{Bioinformatics}\\
    PROTEINS                    &--  &3.68      &\textbf{3.50}       &3.62       &3.51&\(\pm\)0.23 \\\addlinespace\tbf{Social networks}\\
    IMDB-BINARY                 &--  &2.06      &1.50       &2.36       &\textbf{0.54}&            \\
    IMDB-MULTI                  &--  &1.52      &1.14       &2.17       &\textbf{0.38}&            \\
    \bottomrule
  \end{tabular}
\end{table}

\end{document}